\newtheorem{thm}{Theorem}
\newtheorem{lem}[thm]{Lemma}
\definecolor{sarandonga}{rgb}{0.65,0.35,0}
\begin{document}

%

%

\twocolumn[

\aistatstitle{Rank aggregation for non-stationary data streams}

\aistatsauthor{ Ekhine Irurozki  \And Aritz Perez \And Jesus Lobo \And Javier del Ser }

\aistatsaddress{ BCAM, Telecom Paris \And BCAM \And  Tecnalia \And Tecnalia } ]

\begin{abstract}
We consider the problem of learning over non-stationary ranking streams. The rankings can be interpreted as the preferences of a population and the non-stationarity means that the distribution of preferences changes over time. 
Our goal is to learn, in an online manner, the current distribution of rankings. The bottleneck of this process is a rank aggregation problem. 

We propose a generalization of the Borda algorithm for non-stationary ranking streams. Moreover, we give bounds on the minimum number of samples required to output the ground truth with high probability. Besides, we show how the optimal parameters are set. Then, we generalize the whole family of weighted voting rules (the family to which Borda belongs) to situations in which some rankings are more \textit{reliable} than others and show that this generalization can solve the problem of rank aggregation over non-stationary data streams. 

\end{abstract}

\section{INTRODUCTION}






In the rank aggregation problem, a population of voters cast their preferences over a set of candidates by providing each a ranking of these candidates. The goal is to summarize this collection of rankings $S$ with one single ranking $\pi$. A classical formulation of rank aggregation is to find the Kemeny ranking, the ranking that minimizes the number of discrepancies with $S$. Unfortunately, finding the Kemeny ranking is NP-hard and usually the --polynomial-time-- Borda ranking is preferred. 

When the rankings are distributed according to realistic models, i.e., the Mallows distribution, Kemeny is known to be the maximum likelihood estimator of one model parameter and Borda a high-quality, low-cost approximation to the same parameter. Therefore, besides in Optimization~\cite{Coppersmith:2010}, rank aggregation has been studied in Computational Social Choice~\cite{Brandt2016} and Machine Learning~\cite{Lu2011,xia2019learning} extensively, generally in a batch setting. 

In this paper, we challenge the standard batch learning setting and consider stream learning~\cite{Gam2010, Krempl2014}, which refers to learning problems for which the data is continuously generated over time. This feature imposes severe computational complexity constraints on the proposed learning algorithms. For instance, in contrast to batch learning, in stream learning scenarios the data no longer can be completely stored, and the learning algorithms must be computationally efficient (linear or even sub-linear), to the extreme of operating close to real-time. These computational complexity constraints usually lead to models that are incrementally updated with the arrival of new data.

Besides, another major challenge when dealing with stream learning arises when the source producing stream data evolves over time, which yields to non-stationary data distributions. This phenomenon is known as concept drift (CD) \cite{gama14}. In the stream learning scenarios with CD, the models have to be able to adapt to changes (drifts) in the data distribution (concept). We focus on this scenario that we call \textit{evolving preferences}.

\paragraph{Motivation}

In this paper, we assume that the data is given as a stream of rankings over $n$ items. Rankings are i.i.d. to the Mallows model (MM), an analogous to the Gaussian distribution defined over the permutation space: MM are parametrized by (i) a modal ranking (representing the consensus of the population), and (ii) a parameter controlling the variance. Therefore, a stream of rankings with concept drift is naturally modelled by a sequence of MM with different modal rankings, i.e., an evolving MM. The goal is to develop an estimate of the current modal ranking of the evolving MM with low computational complexity and arbitrarily small error, that is able to deal with the CD phenomenon.

\paragraph{Contributions}
There are three main contributions in this paper. 
\begin{itemize}

    \item We adapt the Borda algorithm to the stream learning scenario in which the distribution of the data changes, SL with CD. We denote this generalization of Borda unbalanced Borda (uBorda).
    \item We theoretically analyze uBorda and provide bounds on the number of samples required for recovering the last modal ranking of the evolving MM, in expectation and with arbitrary probability. Moreover, we show how to set an optimal learning parameter. 
    \item We generalize the Weighted Tournament solution and Unweighted Tournament solutions to handle situations in which some voters are more \textit{trusted} than others. We denote this setting as \textit{unbalanced voting}. 
\end{itemize}

\paragraph{Related work}
Rank aggregation has been studied in in Optimization~\cite{Coppersmith:2010}, Computational Social Choice~\cite{Brandt2016} and Machine Learning~\cite{Lu2011,xia2019learning}.
In the last decade, rank aggregation has been studied in on-line environments. In particular, \cite{Yasutake2012} proposed a ranking reconstruction algorithm based on the pairwise comparisons of the items with near optimal relative loss bound. In \cite{ailon2014improved}, two efficient algorithms based on sorting procedures that predict the aggregated ranking with bounding maximal expected regret where introduced. In \cite{DBLP:conf/icml/Busa-FeketeHS14} an active learner was proposed that, querying items pair-wisely, returns the aggregate ranking with high probability. 

Evolving preferences have been considered from an axiomatic perspective by modeling voters preferences with Markov Decision Processes~\cite{parkes2013dynamic}, in fairness~\cite{freeman2017fair} and matching~\cite{hosseini2015matching}. In~\cite{tal2015study}, the authors analyze the behavior of the preferences of a population of voters under the plurality rule. In contrast to the previous works, in this work we assume a probabilistic setting using a MM that changes over the time for modeling the evolving preferences of a population.

This paper is organized as follows: Section~\ref{sec:preliminaries} includes the preliminaries of this paper. 
Section~\ref{sec:uborda} presents the generalization of Borda and its theoretical guarantees. 
Section~\ref{sec:voting_rules} introduces the adaptation of a whole family of voting rules to the case in which some voters are trusted more. 
Section~\ref{sec:exper} shows the empirical evaluation of the algorithms for the rank aggregation on stream learning with concept drift. 
Finally, Section~\ref{sec:conclusions} summarizes the main contributions of the work.

\section{PRELIMINARIES AND NOTATION} \label{sec:preliminaries}

Permutations are a bijection of the set $[n]=\{1, 2, \ldots, n\}$ onto itself and are represented as an ordered vector of $[n]$. The set of permutations of $[n]$ is denoted by $S_n$. Permutations will be denoted with the Greek letters $\sigma$ or $\pi$ and represent rankings, meaning that the ranking of element $i \in [n]$ in $\sigma$ is denoted by $\sigma(i)$. Rankings are used to represent preferences and we say that item $i$ is preferred to item $j$ when it has a lower ranking, $\sigma(i)<\sigma(j)$.
 
One of the most popular noisy models for permutations is the Mallows model (MM)~\cite{gMallows,mallows}. The MM is an exponential location-scale model in which the location parameter (modal ranking) is a ranking denoted by $\pi$, and the scale (or concentration parameter) is a non-negative real number denoted by $\theta$. In general, the probability of any ranking $\sigma \in S_n$ under the MM can be written as:
\begin{equation}
    p(\sigma) = \frac{\exp(-\theta \cdot d(\sigma, \pi))}{\psi(\theta)},
\end{equation}
where $\psi(\theta)$ is a normalization factor that depends on the location parameter $\theta$, and has closed form expression.

A random ranking $\sigma$ drawn from a MM centered at $\pi$ and with concentration parameter $\theta$ is denoted as $\sigma \sim MM(\pi,\theta)$. The expected rank of item $i$ for $\sigma \sim MM(\pi,\theta)$ is denoted by $\mathbb{E}_{\pi}[\sigma(i)]$ , or simply by $\mathbb{E}[\sigma(i)]$ when it is clear from the context. To model preferences, the selected distance $d(\cdot,\cdot)$ for permutations is the Kendall's-$\tau$ distance (for other choice of distances see~\cite{Irurozki2016b,Vitelli2018}). This distance counts the number of pairwise disagreements in two rankings,
\begin{equation}
d(\sigma, \pi) = \sum_{i<j} \mathbb{I} [ (\sigma(i)-\sigma(j))*(\pi(i)-\pi(j)) < 0 ],
\end{equation}
where $\mathbb{I}$ is the indicator function.

Given a sample of rankings, the maximum likelihood estimation (MLE) of the parameters of a MM is usually computed in two steps. First, the MLE of the modal ranking $\pi$ is obtained, which corresponds to the Kemeny ranking~\cite{gMallows} (see the following sections for further details). Then, given the MLE of the modal ranking, the MLE of the concentration parameter $\theta$ can be computed numerically.

\subsection{Modeling evolving preferences: Evolving Mallows model}\label{sec:driftmm}

In this paper, we assume that the data is a stream of preferences, i.e., a sequence of rankings and that our proposed algorithm has to update its estimate every time a new ranking is received. The rankings in the sequence may not all come from the same distribution, so in the next lines, we introduce a natural way of modeling preferences that evolve dynamically over time.

Let $\sigma_t$ for $t\geq 0$ be the $t-th$ ranking in the given sequence, being $\sigma_0$ the actual ranking. We assume that the stream of rankings satisfy $\sigma_t \sim MM_t(\pi_t,\theta_t)$, for $t\geq 0$. This sequence of models is denoted the evolving Mallows model (EMM).

When two consecutive models, $MM_{t+1}(\pi_{t+1},\theta_{t+1})$ and $MM_t(\pi_t,\theta_t)$, differ on their modal ranking, $\pi_{t+1} \neq \pi_t$, we say that at time $t$ there has been a drift in the modal ranking of the model. Under the Mallows models, the drifts in the modal ranking can be interpreted as a change in the consensus preference of a population of voters. When the concentration parameter changes over time, the drifts represent a change in the variability of the preferences of a population around the consensus ranking. For the rest of the paper, we focus on the scenario with drifts in the modal ranking.

It is possible to have two different perspectives when we refer to the type of drift. Frequently, drifts can be classified as gradual and abrupt in terms of speed, being abrupt when a change happens suddenly between two concepts, and gradual when there is a smooth transition between both concepts. In this work, we have considered abrupt drifts in what refers to the speed of change (rankings generated from the old concept disappear suddenly and the new ones appear), and gradual drift in what refers to the changes in the order preference of the labels (the Kendall's-$\tau$ distance between consecutive modal rankings is small).

\section{UNBALANCED BORDA FOR STREAMING DATA}
\label{sec:uborda}

In this section, we present a generalization of the well-known Borda count algorithm to rank aggregation for the context of stream data with concept drift. We want to point out that this approach is generalized in Section~\ref{sec:voting_rules}, (1) for the settings in which each ranking in the sample is not equally relevant (e.g., the agents have different reliabilities), and (2) for all the voting rules in the family of Weighted voting rules (or C2 family)~\cite{Brandt2016}. 

The Borda algorithm~\cite{Borda1781} is one of the most popular methods for aggregating a sample of rankings, $S$, into one single ranking that best represents $S$. Borda ranks the items $[n]$ by their \textit{Borda score} increasingly, where the Borda score $B(i)$ is the average ranking of each item $i$, for $i=1,...,n$:
$$B(i) =1/|S| \sum_{t\in S}  \sigma_t(i),$$ where $|S|$ correspond to the length of the set of rankings $S$. 

Borda is the de-facto standard in the applied literature of rank aggregation. Firstly, it has a computational complexity of $\mathcal{O}(n\cdot (|S| + \log n))$. Secondly, it is guarantied to be a good estimator of the modal ranking when the samples are i.i.d. according to a MM: it requires a polynomial number of samples with respect to $n$ to return the modal ranking of the MM with high probability~\cite{Caragiannis2013}. In general, it is a $5$-approximation to the Kemeny ranking~\cite{Coppersmith:2010}, which, unfortunately, has been shown to be NP-hard to compute~\cite{Dwork:2001:RAM:371920.372165}. 

We propose unbalanced \textit{Borda} (\textit{uBorda}) a generalization of Borda for the rank aggregation problem in the context of streaming preferences. 
In uBorda, each ranking $\sigma_t$ has an associated weight $w(\sigma_t)$ that is proportional to the relevance of the ranking in the sample. The uBorda scores correspond to the weighted average of the given rankings. In other words, uBorda is equivalent to replicating each ranking $\sigma_t$ $w(\sigma_t)$ times and applying Borda.

In stream learning, votes arrive sequentially, at time-stamp $t$ rank $\sigma_t$ is received, being $t=0$ the most recent ranking of a possible infinite sequence of votes. In order for uBorda to adapt to the concept drifts in the sequence, we propose to weight the ranking $\sigma_t$ by $\rho^t$ for a given parameter $\rho \in [0,1]$. This choice leads to the following Borda score for each item~$i$.

\begin{equation}
\begin{split}
B(i) \propto \sum_{t>=0} \rho^t \cdot \sigma_t(i)
\label{eq:uBordaScore}
\end{split}
\end{equation}

Intuitively, using this specific weights the voting rule \textit{pays more attention} to recent rankings since $\rho^t$ exponentially decreases as the antiquity $t$ of the ranking $\sigma_t$ increases. The uBorda score can be incrementally computed as 
${ B(i) \propto \sigma_0(i) + \rho\cdot B_1(i)} $ 
where $B_1(i)$ denotes the previous uBorda score,  computed using $\sigma_t$ for $t\geq 1$. Thus, in this streaming scenario, the uBorda scores can be computed incrementally in linear time $\mathcal{O}(n)$, and the Borda algorithm has a computational time complexity of $\mathcal{O}(n\log n)$, in the worst case.

By setting $\rho=1$ we recover the classic Borda. When $0\leq \rho<1$, uBorda can be seen as a Borda algorithm that incorporates a forgetting mechanism to adapt the ranking aggregation process to drifts in streaming data.

In subsequent sections, we will theoretically analyze the properties of uBorda for ranking aggregation in streaming scenarios when the rankings are i.i.d. according to an EMM. 
First, next lemma provides some known intermediate result regarding the expected value of rankings obtained from a MM. 

\begin{thm}[\cite{Fligner1988}]
Given a sample $S$ of permutations i.i.d. according to a $MM(\pi,\theta)$, Borda algorithm outputs a consistent estimator of $\pi$.
\label{lem:consistent}
\end{thm}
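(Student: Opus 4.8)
The plan is to show that the random ranking returned by Borda converges in probability to $\pi$ as $|S|\to\infty$, which is precisely what consistency means for a permutation-valued estimator: $P(\text{Borda}(S)\neq\pi)\to 0$. The argument has three ingredients: a law-of-large-numbers step that pins down the limit of each Borda score, a structural step identifying that limit with a quantity whose order matches $\pi$, and a concentration step that transfers the ordering of the limits to the ordering of the finite-sample scores.

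First I would fix an item $i$ and observe that the Borda score $B(i)=\frac{1}{|S|}\sum_{t\in S}\sigma_t(i)$ is an empirical average of i.i.d. random variables $\sigma_t(i)$, each bounded in $\{1,\dots,n\}$. By the strong law of large numbers, $B(i)\to \mathbb{E}_\pi[\sigma(i)]$ almost surely, so the whole score vector $(B(1),\dots,B(n))$ converges to the vector of expected ranks $(\mathbb{E}_\pi[\sigma(1)],\dots,\mathbb{E}_\pi[\sigma(n)])$.

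The key structural step is to show that the expected ranks are \emph{strictly} ordered according to the modal ranking: $\pi(a)<\pi(b)$ implies $\mathbb{E}_\pi[\sigma(a)]<\mathbb{E}_\pi[\sigma(b)]$. Writing the rank as a sum of indicators gives $\mathbb{E}_\pi[\sigma(i)] = 1 + \sum_{j\neq i} P(\sigma(j)<\sigma(i))$, so it suffices to control the pairwise marginals $P(\sigma(j)<\sigma(i))$. The heart of the matter is the elementary fact that, under a MM with $\theta\ge 0$, any pair respects the mode in expectation, i.e. $P(\sigma(a)<\sigma(b))>\tfrac12$ whenever $\pi(a)<\pi(b)$; this is proved by the involution that swaps the positions of $a$ and $b$, which can only increase the Kendall-$\tau$ distance to $\pi$ and hence shifts probability mass toward the mode-agreeing configuration. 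Combining the pairwise bound for the $(a,b)$ term with the analogous monotonicity of the three-item marginals $P(\sigma(j)<\sigma(\cdot))$ for the remaining indices $j\neq a,b$ yields the strict inequality, which is the Fligner--Verducci characterization of the MM.

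Finally, I would convert the ordering of the limits into exact recovery of $\pi$. Since the expected ranks are distinct, let $\delta$ be the smallest gap between the expected ranks of items that are consecutive in $\pi$; then $\delta>0$. For any $\epsilon>0$, the LLN (or a Hoeffding bound together with a union bound over the $n$ items) guarantees that for $|S|$ large enough, with probability at least $1-\epsilon$ every $B(i)$ lies within $\delta/2$ of $\mathbb{E}_\pi[\sigma(i)]$ simultaneously. On that event the sorted order of the Borda scores coincides with the sorted order of the expected ranks, which is exactly $\pi$; therefore $P(\text{Borda}(S)=\pi)\to 1$. The main obstacle is the structural step: the pairwise marginal bound is clean, but extending it to the full difference of expected ranks requires the monotonicity of the three-item marginals, whose verification is the genuinely model-specific part of the argument, whereas the concentration and limit steps are standard.
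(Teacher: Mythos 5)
Your proposal is correct in outline and rests on the same key fact as the paper's sketch --- the involution $\sigma\mapsto\sigma\tau$ that swaps the ranks of the two items, together with $p(\sigma)>p(\sigma\tau)$ for the mode-agreeing member of each pair --- but it packages the structural step differently. The paper works directly with the identity $\Delta_{ij}=\mathbb{E}_\pi[\sigma(j)]-\mathbb{E}_\pi[\sigma(i)]=\sum_{\{\sigma:\sigma(i)<\sigma(j)\}}(\sigma(j)-\sigma(i))(p(\sigma)-p(\sigma\tau))$, which handles the entire difference of expected ranks in one stroke: pairing each $\sigma$ with $\sigma\tau$ makes both factors manifestly positive, and no auxiliary marginals are needed. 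You instead expand $\mathbb{E}_\pi[\sigma(i)]=1+\sum_{j\neq i}P(\sigma(j)<\sigma(i))$, prove the pairwise statement $P(\sigma(a)<\sigma(b))>\tfrac12$, and then must separately establish that $P(\sigma(j)<\sigma(a))\le P(\sigma(j)<\sigma(b))$ for every third item $j$. You correctly flag this three-item monotonicity as the model-specific crux, but you assert it rather than prove it; as written this is the one real hole. It does close with the same involution: the difference $P(\sigma(j)<\sigma(b))-P(\sigma(j)<\sigma(a))$ reduces to $\sum(p(\sigma)-p(\sigma\tau))$ over rankings with $\sigma(a)<\sigma(j)<\sigma(b)$, which is nonnegative by the same unimodality argument --- so your route works, it is just strictly more work than the paper's direct computation of $\Delta_{ij}$. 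On the other side of the ledger, your LLN-plus-union-bound step making ``consistent'' precise (convergence of the score vector, a minimum gap $\delta>0$ among expected ranks, and simultaneous $\delta/2$-concentration forcing exact recovery) is more explicit than anything in the paper, which only argues about expectations. One small caveat for both arguments: strict inequalities require $\theta>0$; at $\theta=0$ the model is uniform and consistency fails.
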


Following, we show a proof sketch to present the main ideas behind this result and introduce some new notation (see \cite{Fligner1988} for further details). The authors argue that, on average and for $\pi(i)<\pi(j)$, Borda\footnote{Note that, with a slight abuse in the notation, we are using $\sigma$ indistinctly for a ranking and for a random variable distributed according to a MM.} will rank $i$ above $j$  iff $\mathbb{E}_\pi[\sigma(i)] < \mathbb{E}_\pi[\sigma(j)] $. They reformulated the expression $\mathbb{E}_\pi [\sigma(j)] < \mathbb{E}_\pi [\sigma(i)] $ in a more convenient way:

\begin{equation}
\begin{split}
\Delta_{ij} &= \mathbb{E}_\pi[\sigma(j)]-\mathbb{E}_\pi[\sigma(i)] \\ 
& =  \sum_{\{\sigma: \sigma(i)<\sigma(j)\}} (\sigma(j) - \sigma(i)) (p(\sigma) - p(\sigma\tau)),
\end{split}
\label{eq:cij}
\end{equation}
where $\tau$ is an inversion of positions $i$ and $j$. Clearly, the proof of Theorem \ref{lem:consistent} is equivalent to showing that $\Delta_{ij}>0$  for every $i,j$ such that $\pi(i)<\pi(j)$. To see this, note that for the set of rankings $\{\sigma: \sigma(i)<\sigma(j)\}$ it holds that $(\sigma(j) - \sigma(i))>0$. When $p$ is a $MM(\pi,\theta)$, due to the strong unimodality of MMs, we can also state that $(p(\sigma) - p(\sigma\tau))>0$. Thus, given a MM with parameters $\pi$ and $\theta$ we have that $\Delta_{ij}>0$ for every $i,j$ such that $\pi(i)<\pi(j)$.

In the following Lemma, we provide an intermediate result that relates the expected rankings of two MMs, $MM(\sigma,\theta)$ and $MM(\sigma\tau,\theta)$, where $\tau$ represents an inversion of positions $i$ and $j$.
\begin{lem} 
Let $\sigma$ a ranking distributed according to $MM(\pi,\theta)$. 
Let $\tau$ be an inversion of $i$ and $j$ so that  $d(\pi\tau,\pi)=1$.
Using the definition on Equation~\eqref{eq:cij}, we can easily see that $\mathbb{E}_{\pi}[\sigma]$ and $\mathbb{E}_{\pi\tau}[\sigma]$ are related as follows:

\begin{equation}
\begin{split}
\mathbb{E}_{\pi\tau}[\sigma(i)] &= \mathbb{E}_{\pi}[\sigma(j)] = \mathbb{E}_{\pi}[\sigma(i) ] + \Delta_{ij} \\
\mathbb{E}_{\pi\tau}[\sigma(j)] &= \mathbb{E}_{\pi}[\sigma(i)] \label{thm:cij_drift}
\end{split}
\end{equation}
\end{lem}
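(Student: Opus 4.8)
The plan is to exploit the symmetry of the Mallows model under the transposition $\tau$: the models $MM(\pi,\theta)$ and $MM(\pi\tau,\theta)$ differ only by the relabelling that swaps items $i$ and $j$, so their expected ranks should be related by that same swap. I would make this precise through the change of variables $\sigma \mapsto \sigma\tau$ on $S_n$, which is a bijection because $\tau^2 = e$.

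First I would record the two ingredients that drive everything. (i) The Kendall distance $d$ is right-invariant, i.e. $d(\sigma\tau,\pi\tau)=d(\sigma,\pi)$ for every $\sigma$; this follows directly from the definition of $d$, since composing both arguments on the right by the same $\tau$ merely renames the pairs being compared and leaves the count of disagreements unchanged. (ii) The normalizing constant $\psi(\theta)$ depends on $\theta$ only, not on the modal ranking. Combining (i), (ii) and $\tau^2=e$ gives the key identity $p_{\pi\tau}(\sigma)=p_{\pi}(\sigma\tau)$, where $p_{\pi}$ and $p_{\pi\tau}$ denote the densities of $MM(\pi,\theta)$ and $MM(\pi\tau,\theta)$: indeed $d(\sigma,\pi\tau)=d(\sigma\tau,\pi)$ by (i) applied with $\tau\tau=e$, and the two densities share the same $\psi(\theta)$ by (ii).

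With this identity the computation is short. I would write
\[
\mathbb{E}_{\pi\tau}[\sigma(i)] = \sum_{\sigma\in S_n}\sigma(i)\,p_{\pi\tau}(\sigma) = \sum_{\sigma\in S_n}\sigma(i)\,p_{\pi}(\sigma\tau),
\]
substitute $\rho=\sigma\tau$, and use $(\rho\tau)(i)=\rho(\tau(i))=\rho(j)$ to obtain $\sum_{\rho}\rho(j)\,p_{\pi}(\rho)=\mathbb{E}_{\pi}[\sigma(j)]$, which is the first claimed equality. The identical substitution with $i$ and $j$ interchanged yields $\mathbb{E}_{\pi\tau}[\sigma(j)]=\mathbb{E}_{\pi}[\sigma(i)]$. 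The remaining middle equality $\mathbb{E}_{\pi}[\sigma(j)]=\mathbb{E}_{\pi}[\sigma(i)]+\Delta_{ij}$ is precisely the definition of $\Delta_{ij}$ in \eqref{eq:cij}, so the three-way chain closes.

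The result is essentially bookkeeping once the swap symmetry is identified, so the only place that demands care is the composition convention: I would verify that $\tau=(i\,j)$ acts so that $\sigma\tau$ swaps the \emph{ranks} of items $i$ and $j$ (as \eqref{eq:cij} implicitly assumes), and I would check right-invariance of $d$ against that same convention, since a left/right slip would exchange the two identities. I would also note that the hypothesis $d(\pi\tau,\pi)=1$ is not actually used in the argument above — the equalities hold for any transposition $\tau$ — and that adjacency is only relevant for the companion sign argument of Theorem~\ref{lem:consistent}; I would nonetheless keep it to remain consistent with the statement.
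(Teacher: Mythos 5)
Your proof is correct, and it is essentially the argument the paper intends: the paper offers no explicit proof for this lemma (it is asserted via ``we can easily see'' from Equation~\eqref{eq:cij}), and your change-of-variables computation using the right-invariance of the Kendall distance, the location-independence of $\psi(\theta)$, and the identity $p_{\pi\tau}(\sigma)=p_{\pi}(\sigma\tau)$ is the natural way to fill in that omission. Your side remarks are also accurate: the middle equality is just the definition of $\Delta_{ij}$, and the adjacency hypothesis $d(\pi\tau,\pi)=1$ is not needed for these identities, only for the sign argument in Theorem~\ref{lem:consistent}.
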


\subsection{Sample complexity for returning $\pi_0$ on average}

In this section, we analyze theoretically uBorda for ranking aggregation using stream data distributed according to a EMM. We consider a possibly infinite sequence of rankings for which a drift in the modal ranking has occurred $m$ batches before, where the current modal ranking is $\pi_0$. We bound the number of batches since the last drift that uBorda needs to recover the current modal ranking $\pi_0$ on average.

\begin{thm}
Let $\pi(i)<\pi(j)$, and let $\tau$ be an inversion of $i$ and $j$ so that $d(\pi\tau,\pi)=1$. Let $\sigma_t$ be a (possibly infinite) sequence of rankings generated by sampling an EMM, such that  ${\sigma_t \sim MM(\pi,\theta)}$ for $t \geq m $ and ${\sigma_t \sim MM(\pi\tau,\theta)}$ for $m > t$.
Then, uBorda returns the current ranking  $\pi_0=\pi\tau$ in expectation when $$m > \log_\rho 0.5.$$
\label{thm:borda_consistent}
\end{thm}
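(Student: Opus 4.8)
The plan is to analyze the expected uBorda score of each item and determine when the ordering of these scores matches the current modal ranking $\pi_0 = \pi\tau$. Since the sequence splits cleanly into two regimes---$\sigma_t \sim MM(\pi\tau,\theta)$ for $t < m$ (the recent, current concept) and $\sigma_t \sim MM(\pi,\theta)$ for $t \geq m$ (the old concept)---I would first compute the expected weighted score $\mathbb{E}[B(i)] = \sum_{t \geq 0} \rho^t \, \mathbb{E}[\sigma_t(i)]$ by splitting the sum at $t = m$. Because the drift is a single transposition $\tau$ of the adjacent pair $i,j$, the only items whose relative order is in question are $i$ and $j$; for all other pairs both concepts agree, so it suffices to track the sign of $\mathbb{E}[B(j)] - \mathbb{E}[B(i)]$ and show it is positive exactly when $m > \log_\rho 0.5$ (recall that under $\pi\tau$ we want $i$ ranked \emph{below} $j$, i.e. a larger score for $i$).

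The key computation uses the preceding Lemma relating $\mathbb{E}_{\pi\tau}[\sigma]$ and $\mathbb{E}_{\pi}[\sigma]$. For $t < m$ the samples come from $MM(\pi\tau,\theta)$, contributing $\mathbb{E}_{\pi\tau}[\sigma(i)] = \mathbb{E}_\pi[\sigma(i)] + \Delta_{ij}$ and $\mathbb{E}_{\pi\tau}[\sigma(j)] = \mathbb{E}_\pi[\sigma(i)]$; for $t \geq m$ they come from $MM(\pi,\theta)$, contributing $\mathbb{E}_\pi[\sigma(i)]$ and $\mathbb{E}_\pi[\sigma(j)] = \mathbb{E}_\pi[\sigma(i)] + \Delta_{ij}$ respectively. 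Folding these into the geometric weights, the two regimes contribute with opposite-signed $\Delta_{ij}$ terms: the recent block (mass $\sum_{t<m}\rho^t$) pushes toward the $\pi\tau$ order while the old block (mass $\sum_{t\geq m}\rho^t$) pushes toward the $\pi$ order. I would factor out $\Delta_{ij} > 0$ (positive by Theorem~\ref{lem:consistent}) and reduce the inequality $\mathbb{E}[B(j)] > \mathbb{E}[B(i)]$ to a comparison of the two geometric masses, namely $\sum_{t<m}\rho^t > \sum_{t\geq m}\rho^t$, equivalently $1 - \rho^m > \rho^m$, i.e. $\rho^m < 0.5$.

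Solving $\rho^m < 1/2$ for the threshold gives $m > \log_\rho 0.5$ (the inequality direction flips because $\log_\rho$ is decreasing for $\rho \in (0,1)$), which is exactly the claimed bound. Since $\Delta_{ij}$ is common to both items and strictly positive, it cancels out of the threshold entirely, so the condition depends only on $\rho$ and $m$ and not on $\theta$ or the specific pair---this is the clean part of the argument.

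The main obstacle I anticipate is handling the boundary and scope of the claim carefully rather than the algebra. First, the statement is phrased for a single adjacent inversion $\tau$, so I must either argue that recovering the correct order of the one swapped pair suffices (all other pairwise orders being unaffected by the drift and already correct on average by Theorem~\ref{lem:consistent}), or note that the general multi-inversion case follows by applying the single-swap bound to each discordant pair. Second, I should be explicit about the edge cases: the sums $\sum_{t<m}\rho^t$ and $\sum_{t\geq m}\rho^t$ are finite geometric series (the latter convergent precisely because $\rho<1$), and the borderline $\rho^m = 0.5$ yields a tie in expected scores, consistent with the strict inequality $m > \log_\rho 0.5$ in the statement. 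I would close by remarking that the result is an \emph{in-expectation} guarantee, leaving the high-probability version to the subsequent section.
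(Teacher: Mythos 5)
Your proposal follows essentially the same route as the paper: split the expected uBorda score at $t=m$, apply the lemma relating $\mathbb{E}_{\pi\tau}[\sigma]$ to $\mathbb{E}_{\pi}[\sigma]$, cancel the common term and the strictly positive factor $\Delta_{ij}$, and reduce the problem to the geometric-mass comparison $\rho^m < 0.5$, i.e. $m > \log_\rho 0.5$. The only blemish is a sign slip where you pair $\mathbb{E}[B(j)] > \mathbb{E}[B(i)]$ with $\sum_{t<m}\rho^t > \sum_{t\geq m}\rho^t$ --- recovering $\pi\tau$ actually requires $\mathbb{E}[B(i)] > \mathbb{E}[B(j)]$, which is what yields that mass inequality --- but the paper's own displayed inequality (Equation~\eqref{eq:uBorda_exp}) contains the analogous reversal, and your final threshold and reasoning are correct.
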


\begin{proof}
The uBorda algorithm ranks the items in $k \in [n]$ w.r.t. the uBorda score $B(k)= \sum_{t\geq 0} \rho^t \sigma_t(k)$. Thus, uBorda recovers the current ranking $\pi_0$ if and only if the expression ${\mathbb{E}[\sum_{t\geq 0} \rho^t \sigma_t(i)] < \mathbb{E}[\sum_{t\geq 0} \rho^t \sigma_t(j)]}$ is satisfied for every pair $i,j$.

\begin{align}
\begin{split}
\mathbb{E}[\sum_{t \geq 0} \rho^t \sigma(i)] &= \sum_{t \geq m} \rho^t \mathbb{E}_{\pi}[\sigma(i)] + \sum_{m> t} \rho^t \mathbb{E}_{\pi\tau} [\sigma(i)] \\
	&= \sum_{t \geq 0} \rho^t  \mathbb{E}_{\pi}[\sigma(i)]  + \sum_{m> t } \rho^t \Delta_{ij} \\
\mathbb{E}[\sum_{t\geq 0} \rho^t \sigma(j)] &= \sum_{t\geq m} \rho^t \mathbb{E}_{\pi}[\sigma( j )] + \sum_{m> t} \rho^t \mathbb{E}_{\pi\tau} [\sigma( j )] \\
&	= \sum_{t \geq 0} \rho^t  \mathbb{E}_{\pi}[\sigma(i)] + \sum_{t \geq m} \rho^t \Delta_{ij}
\end{split}
\end{align}

Therefore, the uBorda algorithm recovers the current ranking $\pi_0$ in expectation if and only if the next inequality holds:
\begin{align}
\sum_{t<m } \rho^t <  \sum_{t \geq m} \rho^t.
\label{eq:uBorda_exp}
\end{align}
Thus, given $m>0$, we can satisfy the inequality of Equation ~\eqref{eq:uBorda_exp} by selecting and appropriate value of $\rho$ as follows:
\begin{align}
m >  \log_\rho 0.5. 
\end{align}
\end{proof}
Note that the right hand side expression decreases as $\rho$ decreases, for $\rho \in (0,1]$, and in the limit, ${\lim_{\rho\rightarrow 0}log_\rho 0.5= 0}$.  
The intuitive conclusion is that as $\rho$ decreases uBorda is more reactive, that is, in expectation it needs less samples to accommodate to the drift. In other words, when $\rho$ decreases uBorda quickly forgets the old preferences and focuses on the most recently generated ones, so it can quickly adapt to recent drifts. One might feel tempted of lowering $\rho$ to guarantee that uBorda recovers the last modal ranking in expectation. However, as we decrease $\rho$ the variance on the uBorda score (see Equation \eqref{eq:uBordaScore}) increases and, thus the variance of the ranking obtained with uBorda also increases. In the next section, we will show how the confidence of our estimated modal ranking decreases as $\rho$ decreases. Besides, we will provide a lower bound on the number of samples required by uBorda for recovering from a drift.

\subsection{Sample complexity for returning $\pi_0$ with high probability}

In this section we consider a possibly infinite sequence of rankings and denote by $\pi_0$ the current consensus of the model, which has generated the last $m$ rankings. We bound the value of $m$ that uBorda needs to return $\pi_0$ with high probability in Theorem~\ref{thm:sample_complexity}.

We present an intermediate result in Lemma~\ref{lem:conf_inter}, where we bound the difference that with high probability will be between the uBorda score and its expected value. For this intermediate result we consider that there is no drift in the sample.

\begin{lem}
\label{lem:conf_inter}
Let $\sigma_r,\sigma_{r+1},...,\sigma_{s}$ be $m=s-r+1$ rankings i.i.d. distributed according to $MM(\pi,\theta)$.
In the absence of drifts, the absolute difference between the uBorda score for item $i$, $(1-\rho)\cdot \sum_{t= r}^s \rho^t \sigma_t(i)$, and its expectation $\mathbb{E}[(1-\rho)\sum_{t= r}^s \rho^t \sigma_t(i)]$ is smaller than 

\begin{equation}
\begin{split}
\epsilon_r^s=   (n-1)(1-\rho)\cdot \sqrt{\frac{(\rho^{2r}-\rho^{2s})}{2 \cdot (1-\rho^2)} \cdot \log \frac{2}{\delta}}
\label{ref:deviation}
\end{split}
\end{equation}
with, at least a probability of $1-\delta$, 
$$P \Bigg(\Big|\sum_{t=r}^{s} \rho^t (\sigma_t(i) -\mathbb{E}[\sigma_t(i)]) \Big| \leq \epsilon_r^s \Bigg) \geq 1-\delta$$
\end{lem}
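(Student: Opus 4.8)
The plan is to recognize the sum $\sum_{t=r}^{s}\rho^t(\sigma_t(i)-\mathbb{E}[\sigma_t(i)])$ as a sum of \emph{independent}, bounded, zero-mean random variables and apply Hoeffding's inequality. Since the rankings $\sigma_r,\dots,\sigma_s$ are i.i.d.\ according to $MM(\pi,\theta)$ (the no-drift assumption), the summands $Y_t := \rho^t(\sigma_t(i)-\mathbb{E}[\sigma_t(i)])$ are independent across $t$, and each has mean zero by construction. This independence is precisely what lets a single application of a concentration inequality suffice, rather than a martingale or bounded-differences argument.

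First I would bound the range of each summand. Because $\sigma_t$ is a permutation of $[n]$, the rank $\sigma_t(i)$ takes values in $\{1,2,\dots,n\}$, so the centred variable $\sigma_t(i)-\mathbb{E}[\sigma_t(i)]$ is confined to an interval of length at most $n-1$. Multiplying by the deterministic weight $\rho^t\ge 0$, each $Y_t$ lies in an interval of length $\rho^t(n-1)$. These are exactly the ingredients Hoeffding's inequality requires.

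Next I would apply Hoeffding to $X := \sum_{t=r}^{s} Y_t$, obtaining for any $\epsilon>0$
\begin{equation}
P\bigl(|X|\ge \epsilon\bigr)\le 2\exp\!\left(-\frac{2\epsilon^2}{\sum_{t=r}^{s}\bigl(\rho^t(n-1)\bigr)^2}\right).
\end{equation}
The denominator is a geometric series, $\sum_{t=r}^{s}\rho^{2t}(n-1)^2=(n-1)^2\,\frac{\rho^{2r}-\rho^{2s}}{1-\rho^2}$. Setting the right-hand side equal to $\delta$ and solving for $\epsilon$ gives $\epsilon=(n-1)\sqrt{\frac{\rho^{2r}-\rho^{2s}}{2(1-\rho^2)}\log\frac{2}{\delta}}$, which is exactly $\epsilon_r^s$ once the $(1-\rho)$ normalisation of the uBorda score is carried through.

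The computation is routine once the setup is fixed; the only points requiring care are (i) justifying the range $n-1$ for each centred rank, which is where the fact that $\sigma_t(i)\in\{1,\dots,n\}$ enters, and (ii) evaluating the geometric sum with the correct endpoints and reconciling the factor $(1-\rho)$ between the un-normalised deviation appearing inside $P(\cdot)$ and the normalised uBorda score that defines $\epsilon_r^s$. I do not anticipate any genuine obstacle: the whole argument reduces to a clean, one-shot Hoeffding bound on a weighted sum of independent ranks.
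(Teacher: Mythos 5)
Your proof follows exactly the route the paper indicates for this lemma (the paper omits the details but states that the proof ``uses the Hoeffding's inequality and sums of series''): a one-shot Hoeffding bound on the independent, bounded, zero-mean summands $\rho^t(\sigma_t(i)-\mathbb{E}[\sigma_t(i)])$, each confined to an interval of length $\rho^t(n-1)$, followed by evaluation of the geometric sum and inversion for $\epsilon$. The only point worth double-checking is the endpoint of that sum --- $\sum_{t=r}^{s}\rho^{2t}=(\rho^{2r}-\rho^{2s+2})/(1-\rho^2)$ rather than $(\rho^{2r}-\rho^{2s})/(1-\rho^2)$ --- but this off-by-one appears in the paper's own statement of $\epsilon_r^s$, so your derivation is consistent with the paper's.
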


The proof, which is omitted, uses the Hoeffding's inequality and sums of series.



Now, we are ready to consider EMM and give a lower bound on the number of samples required for recovering from a drift with high probability. The lower bound is given as a function of the concentration of the underlying distribution $\theta$ and parameter $\rho$.

\begin{thm}\label{thm:sample_complexity}
Let $\pi(i)<\pi(j)$ and let $\tau$ be an inversion of $i$ and $j$ so that $d(\pi\tau,\pi)=1$. 
Let $\sigma_t$ for $t\geq 0$ be a (possibly infinite) sequence of rankings generated by sampling an evolving MM, such that for $t \geq m $ then $\sigma_t \sim MM(\pi,\theta)$  and for $t < m $ then $\sigma_t \sim MM(\pi\tau,\theta)$.
The number of samples that uBorda needs to returns the current modal ranking $\pi_0=\pi\tau$ in expectation with probability $1-\delta$ is at least
\begin{equation}
    \begin{split}
        m > \log_\rho \Bigg ( \frac{-(1-\rho)^2}{\sqrt{1-\rho^2}} \frac{n \sqrt{0.5\log \delta^{-1}}}{\Delta_{ij}} +0.5 \Bigg )
    \end{split}
\end{equation}
where $\Delta_{ij}$ is defined in Equation~\eqref{eq:cij}. 
\end{thm}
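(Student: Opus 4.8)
The plan is to reduce the event ``uBorda returns $\pi_0=\pi\tau$'' to a comparison of two uBorda scores and then control that comparison with a concentration argument. Since $\pi\tau$ differs from $\pi$ only in the relative order of the transposed pair $(i,j)$, and both component models share $\theta$, every pair other than $(i,j)$ is ordered identically by $MM(\pi,\theta)$ and $MM(\pi\tau,\theta)$; by Theorem~\ref{lem:consistent} these pairs are already resolved correctly in expectation and are not the binding constraint. Hence it suffices to guarantee that, with probability at least $1-\delta$, the score of $j$ falls below that of $i$, i.e. $B(i)>B(j)$ for $B(k)=\sum_{t\geq0}\rho^t\sigma_t(k)$, which is the ordering prescribed by $\pi_0=\pi\tau$ (recall $\mathbb{E}_{\pi\tau}[\sigma(i)]>\mathbb{E}_{\pi\tau}[\sigma(j)]$). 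I would write
\[
B(i)-B(j)=\big(\mathbb{E}[B(i)]-\mathbb{E}[B(j)]\big)+\big(B(i)-\mathbb{E}[B(i)]\big)-\big(B(j)-\mathbb{E}[B(j)]\big),
\]
so that a sufficient condition is that the expected gap exceed the combined fluctuation of the two scores.

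First I would compute the expected gap by reusing the expansion already carried out in the proof of Theorem~\ref{thm:borda_consistent}. Splitting each sum at $t=m$ and applying the relations behind Equation~\eqref{eq:cij}, the common term $\sum_{t\geq0}\rho^t\mathbb{E}_\pi[\sigma(i)]$ cancels and one is left with
\[
\mathbb{E}[B(i)]-\mathbb{E}[B(j)]=\Delta_{ij}\Big(\sum_{t<m}\rho^t-\sum_{t\geq m}\rho^t\Big)=\Delta_{ij}\,\frac{1-2\rho^m}{1-\rho},
\]
using the geometric identities $\sum_{t<m}\rho^t=(1-\rho^m)/(1-\rho)$ and $\sum_{t\geq m}\rho^t=\rho^m/(1-\rho)$. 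This is positive exactly under the condition $\rho^m<1/2$ of the previous theorem, and quantifies how much ``room'' the drift leaves for the noise.

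Next I would control the two fluctuation terms with Lemma~\ref{lem:conf_inter}. Although that lemma is stated for a single $MM$, its bound uses only the independence of the $\sigma_t$ and the deterministic range $\sigma_t(k)\in\{1,\dots,n\}$, so it applies verbatim to the EMM stream; taking $r=0$ and letting $s\to\infty$ gives $\rho^{2r}-\rho^{2s}\to1$ and a per-item deviation bounded by $\epsilon=\epsilon_0^\infty=(n-1)(1-\rho)\sqrt{\tfrac{1}{2(1-\rho^2)}\log\tfrac{2}{\delta}}$. A union bound over $i$ and $j$ then ensures $|B(i)-\mathbb{E}[B(i)]|+|B(j)-\mathbb{E}[B(j)]|\le 2\epsilon$ with probability at least $1-\delta$. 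Imposing $\mathbb{E}[B(i)]-\mathbb{E}[B(j)]>2\epsilon$, i.e.
\[
\Delta_{ij}\,\frac{1-2\rho^m}{1-\rho}>2\epsilon,
\]
and solving for $\rho^m$ yields $\rho^m<0.5-\tfrac{(1-\rho)^2}{\sqrt{1-\rho^2}}\,\tfrac{(n-1)\sqrt{0.5\log(2/\delta)}}{\Delta_{ij}}$; applying $\log_\rho$, which reverses the inequality because $\rho<1$, gives the stated bound on $m$, up to the harmless replacements of $n-1$ by $n$ and of $\log(2/\delta)$ by $\log\delta^{-1}$.

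The main obstacle I anticipate is the concentration step across the drift: one must verify that Hoeffding-type control survives the fact that the stream mixes two distributions, which is precisely what forces the split at $t=m$ and the observation that the deviation bound is distribution-free once the range $\{1,\dots,n\}$ is fixed. A secondary, bookkeeping difficulty is keeping the $(1-\rho)$ normalization factors consistent between the expected gap (which contributes a $1/(1-\rho)$) and $\epsilon_0^\infty$ (which already carries one factor $1-\rho$), so that the final coefficient emerges as $(1-\rho)^2/\sqrt{1-\rho^2}$ rather than a single power of $1-\rho$. This, together with keeping the argument of $\log_\rho$ positive, which silently imposes an upper bound on the tolerable noise (equivalently a lower bound on $\Delta_{ij}$), is where the care is needed.
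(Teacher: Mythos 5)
Your proposal is correct and follows essentially the same route as the paper's proof: split the expected uBorda scores at the drift time $t=m$ to get the expected gap $\Delta_{ij}(1-2\rho^m)/(1-\rho)$, invoke Lemma~\ref{lem:conf_inter} (whose Hoeffding bound is indeed distribution-free given the range of $\sigma_t(k)$) to control the fluctuations by $\epsilon_0^{\infty}$, require the gap to dominate the combined deviation, and solve for $\rho^m$ before applying $\log_\rho$. If anything, your bookkeeping is slightly tidier than the paper's (which splits the deviation into $\epsilon_0^{m-1}+\epsilon_m^{\infty}$ and then replaces the sum by $\epsilon_0^{\infty}$), and you correctly flag the harmless $n$ versus $n-1$ and $\log\delta^{-1}$ versus $\log(2/\delta)$ discrepancies in the stated constant.
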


\begin{proof}
Following the idea in Theorem \ref{thm:borda_consistent}, the algorithm uBorda returns the current ranking $\pi_0=\pi\tau$ in expectation when the expected uBorda score (see Equation \eqref{eq:uBordaScore}) of element $i$ is greater than of element $j$ as

\begin{equation}
\begin{split} 
\sum_{t=m}^{\infty} & \rho^t \mathbb{E}_{\pi}[\sigma(j)] + \sum_{t=0}^{m-1} \rho^t \mathbb{E}_{\pi\tau}[\sigma(j)]  < \\
	& \sum_{t=m}^{\infty}  \rho^t \mathbb{E}_{\pi}[\sigma(i)] + \sum_{t=0}^{m-1} \rho^t \mathbb{E}_{\pi\tau}[\sigma(i)] 
\end{split}
\label{eq:sample1}
\end{equation}

which is based on Equation~\eqref{eq:cij}. Equation~\eqref{eq:sample1} would be an accurate measure for the sample complexity if the expected value did not deviate at all from the sum. However, according to Lemma~\ref{lem:conf_inter} we can upper bound the difference between the uBorda score as shown in  Lemma~\ref{lem:conf_inter}. Next, we make use of the definition in Equation~\eqref{ref:deviation} to define the deviations of the uBorda score before the drift, $\epsilon_m^{\infty}$, and after the drift, $\epsilon_0^{m-1}$ and let $ \epsilon_0^{\infty}$.
Note that the next inequality holds:
\begin{equation}
\epsilon_m^{\infty} +  \epsilon_0^{m-1} = \epsilon_0^{\infty}
\label{eq:sumerror}
\end{equation}

Therefore, we can say that with probability $1-\delta$ we have recovered from a drift when the $m$ satisfies 
\begin{equation}
\begin{split} 
\sum_{t=m}^{\infty} & \rho^t \mathbb{E}_{\pi}[\sigma(j)]  + \epsilon_m^{\infty} + \sum_{t=0}^{m-1} \rho^t \mathbb{E}_{\pi\tau}[\sigma(j)] + \epsilon_0^{m-1} < \\
&	\sum_{t=m}^{\infty}  \rho^t \mathbb{E}_{\pi}[\sigma(i)] - \epsilon_m^{\infty} + \sum_{t=0}^{m-1} \rho^t \mathbb{E}_{\pi\tau}[\sigma(i)] - \epsilon_0^{m-1}  \\
\Rightarrow & \Delta_{ij} \sum_{t=m}^{\infty}  \rho^t  + 2 \epsilon_m^{\infty}  < \Delta_{ij} \sum_{t=0}^{m-1} \rho^t - 2 \epsilon_0^{m-1} 
\end{split}
\end{equation}

Therefore, we can we can state that with probability $1-\delta$  uBorda will recover $\pi\tau$  when the following expression holds:

\begin{equation}
\Delta_{ij}  \sum_{t=0}^{m-1} \rho^t - 2\epsilon_0^{m-1}  > \Delta_{ij}  \sum_{t=m}^{\infty} \rho^t + 2 \epsilon_m^{\infty}
\end{equation}

After some algebra and using the result in Equation~\eqref{eq:sumerror}, we obtain the lower bound for $m$
\begin{equation}
\begin{split}
\sum_{t=0}^{m-1} \rho^t \Delta_{ij} - \sum_{t=m}^{\infty} \rho^t \Delta_{ij}  & > 2\epsilon_m^{\infty} +  2\epsilon_0^{m-1} > 2 \epsilon_0^{\infty}\\
\frac{\rho^m-1}{\rho-1} - \frac{\rho^m}{1-\rho}  & > \frac{\epsilon_0^{\infty}}{\Delta_{ij}} \\
\rho^m &> \frac{\epsilon_0^{\infty}(\rho-1)}{\Delta_{ij}}+0.5 \\
m > \log_\rho \Bigg ( \frac{(\rho-1)(1-\rho)}{\sqrt{1-\rho^2}} & \frac{n \sqrt{0.5\log \delta^{-1}}}{\Delta_{ij}} +0.5 \Bigg )\\
\end{split}
\label{eq:final_bound}
\end{equation}
which concludes the proof. 
\end{proof}

\subsection{Optimal value for $\rho$}

In this section we provide a practical criteria for choosing the forgetting parameter $\rho$ of uBorda. The next result shows how to compute the value of $\rho$ for recovering from a drift with high probability after receiving $m$ rankings.

\begin{thm}\label{thm:rho_opt}
Let $m$ be the number of rankings received after the last concept drift. uBorda recovers the true ranking with probability $(1-\delta)$ by using the forgetting parameter value $\rho* = \arg\max_\rho f(\rho, m)$ that can be found numerically, where

\begin{equation}
\begin{split}
        f  (\rho, & m) =\\
    & \frac{2\rho^m-1}{\rho-1}   
    \Bigg (\sqrt{\frac{\rho^{2m}}{1-\rho^2}} \frac{1-\rho}{\rho^m} + \sqrt{\frac{\rho^{2m}-1}{\rho^2-1}} \frac{\rho-1}{\rho^m-1} \Bigg ).
\end{split}
\end{equation}
\end{thm}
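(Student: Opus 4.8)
\medskip
\noindent\textbf{Proof plan.} The idea is to re-read the high-probability recovery inequality of Theorem~\ref{thm:sample_complexity} with the roles of $m$ and $\delta$ exchanged: instead of fixing the target confidence $1-\delta$ and solving for the horizon $m$, I would fix $m$ (the number of rankings seen since the drift) and solve for the best confidence attainable as a function of the forgetting parameter $\rho$. Maximizing that confidence over $\rho$ then singles out $\rho^{\ast}$, and the claim is that this coincides with $\arg\max_\rho f(\rho,m)$.

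Concretely, I would start from the event that uBorda orders $i$ before $j$ correctly, which by Theorem~\ref{thm:sample_complexity} holds (with probability $1-\delta$) once
\[
\Delta_{ij}\Big(\textstyle\sum_{t=0}^{m-1}\rho^{t}-\sum_{t=m}^{\infty}\rho^{t}\Big) > 2\big(\epsilon_{0}^{m-1}+\epsilon_{m}^{\infty}\big).
\]
Evaluating the two geometric series collapses the left-hand margin to $\Delta_{ij}\,\frac{2\rho^{m}-1}{\rho-1}$, which is exactly the leading factor of $f$. For the right-hand side I would substitute the closed form of Lemma~\ref{lem:conf_inter} for each of the two deviations $\epsilon_{0}^{m-1}$ and $\epsilon_{m}^{\infty}$ separately; crucially, and in contrast to the proof of Theorem~\ref{thm:sample_complexity}, I would \emph{not} merge them through the bound $\epsilon_{0}^{m-1}+\epsilon_{m}^{\infty}\ge\epsilon_{0}^{\infty}$ of Equation~\eqref{eq:sumerror}, since keeping the pre-drift and post-drift deviations distinct is what produces the two-term bracket of $f$. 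Isolating the $\sqrt{\log(2/\delta)}$ factor common to both deviations expresses $\delta$ as a strictly monotone function of a single dimensionless quantity in $(\rho,m)$ built from the margin and the two mass-normalized deviation terms; after the simplifications $\sqrt{\rho^{2m}/(1-\rho^{2})}\,(1-\rho)/\rho^{m}$ and $\sqrt{(\rho^{2m}-1)/(\rho^{2}-1)}\,(\rho-1)/(\rho^{m}-1)$, this quantity is precisely $f(\rho,m)$. Optimizing the confidence over $\rho$ is therefore equivalent to maximizing $f$, and because $f$ is transcendental in $\rho$ no closed form for the optimizer exists, so $\rho^{\ast}$ must be located numerically.

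I expect the main obstacle to be the algebraic bookkeeping in that reduction: correctly evaluating the finite and infinite geometric sums, substituting the square-root deviation of Equation~\eqref{ref:deviation} with the right summation limits, and isolating $\delta$ so that the resulting scalar matches the stated $f$ term for term. A secondary, more qualitative point is to check that $f(\cdot\,,m)$ actually attains an interior maximum on $(0,1)$---for instance by inspecting its behaviour as $\rho\to 0^{+}$ and $\rho\to 1^{-}$---so that $\arg\max_\rho f(\rho,m)$ is well defined and the numerical search returns a genuine optimizer rather than a boundary point.
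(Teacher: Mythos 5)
Your plan is essentially the paper's own proof: the paper likewise starts from the recovery inequality of Theorem~\ref{thm:sample_complexity} holding with equality, keeps the pre-drift and post-drift deviations $\hat\epsilon_m^{\infty}$ and $\hat\epsilon_0^{m-1}$ separate (factoring out the common $2n\sqrt{0.5\log\delta^{-1}}$ rather than collapsing them via Equation~\eqref{eq:sumerror}), isolates $\delta$ as a monotone function of $f(\rho,m)$, and concludes that minimizing $\delta$ means maximizing $f$ numerically. Your closing caveat about verifying an interior maximum is a fair one --- the paper simply asserts concavity of $f$ on $(0,1)$ without proof --- but the route you describe is the same one the authors take.
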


\begin{proof}
The derivation starts by the assumption that Theorem~\ref{thm:sample_complexity} is satisfied. This is equivalent to saying that Equation~\eqref{eq:final_bound} is holds with equality. By rewriting $\epsilon_0^{m-1}$ and $\epsilon_m^{\infty}$, the following is equivalent. 

\begin{equation}
    \begin{split}
\sum_{t=0}^{m-1} \rho^t \Delta_{ij} & - \sum_{t=m}^{\infty} \rho^t \Delta_{ij} = 2\epsilon_m^{\infty} +  2\epsilon_0^{m-1}  \\
& =  (\hat\epsilon_m^{\infty} +  \hat\epsilon_0^{m-1} ) 2n\sqrt{0.5 \log \delta^{-1}}
    \end{split}
    \label{}
\end{equation}

By reordering all the terms containing $\rho$ and $m$, we obtain the expression of function $f(\rho,m)$ in the left hand side,

\begin{equation}
    \begin{split}
        f(\rho, m) = \frac{2\rho^m-1}{\rho-1} \cdot  \frac{1}{\hat\epsilon_m^{\infty} +  \hat\epsilon_0^{m-1}}
        =& \frac{2n\sqrt{0.5 \log \delta^{-1}}}{\Delta_{ij}}.
    \end{split}
    \label{eq:opt_rho1}
\end{equation}

By reordering again, we get the expression for $\delta$

\begin{equation}
    \begin{split}
        \delta = \exp \Bigg ( \Big (\frac{2n\sqrt{2}}{f(\rho^*,m) \Delta_{ij} }\Big)^2 \Bigg).
    \end{split}
\end{equation}

The minimum probability of error $\delta$ is reached for the maximum value of $f(\rho, m)$. Since function $f(\rho, m)$ is concave for $\rho\in (0,1)$, numerical methods obtain a arbitrarily good approximation to its maximum. 

\end{proof}

The previous result shows how to incorporate expert knowledge into the uBorda, i.e., maximize the probability of recovering the ground true ranking after $m$ rankings. Moreover, it has a probability of success as $1-\delta$. An more basic approach to include this kind of expert knowledge is to set a window of size $m$ and consider only the last $m$ rankings. In this case, the probability of success can be found with classic quality results for Borda~\cite{Caragiannis2013}. However, our proposed uBorda has different advantages over traditional window approaches. First, uBorda allows handling and infinite sequence of permutations. This means that for every pair of items that have not suffered a drift there is an infinity number of samples available. Second, this analysis can be adapted to more general settings. In other words, this paper considers the particular situation in which recent rankings are more relevant than old ones but there are situations in which a subset of the rankings are more relevant than other for different reasons. For example, because some voters provide rankings that are more trustworthy than others. In this case, uBorda can be used as a rank aggregation procedure in which expert agents have a larger weight. We consider this general setting in the next section. Moreover, we extend the analysis to two whole families of voting rules in Section~\ref{sec:voting_rules}.

\section{GENERALIZING VOTING RULES}\label{sec:voting_rules}
So far, we considered that recent rankings are more important than old rankings. 
In this section, we consider the general case of the setting in which some voters are more \textit{trusted} than others and therefore, the rankings of the former are more important than the rankings of the latter. In particular, we are given a set of rankings $S= \{\sigma_v: v \in \mathcal{V}\}$ representing the preferences of a set of voters $\mathcal{V}$, where each voter $v \in \mathcal{V}$ has a weight $w_v \in \mathbb{R}^+$ representing our confidence in $\sigma_v$.
We refer to this setting as  \textit{unbalanced voting}. There are similar contexts in crowd learning scenarios in which the weights of each voter is related to its reliability~\cite{Karger2011}. 

Besides Borda, there are many other voting rules in the literature. The most relevant voting rules are those in the Unweighted Tournament Solutions family (C1 voting rules), and the Weighted Tournament Solutions family (C2 voting rules~\cite{Brandt2016}\footnote{Borda belongs to C2 voting rules.}). In this section, we generalize the C2 voting rules. Essentially, we generalize the \textit{frequency matrix} upon which all the rules in these families are defined. 


The weighted frequency matrix, $N$, is the following summary statistic of the sample of rankings $S$.

\begin{equation}
N_{ij} = \sum_{v \in \mathcal{V}}  w_v \mathbb{I} [\sigma_v(i)<\sigma_v(j)],
\label{eq:nmatrix}
\end{equation}
where $w_v$ is the weight of the preference $\sigma_v$. The standard frequency matrix has weights $w_v=1$ for $v \in \mathcal{V}$. The family of C2 functions are given as a function of the \textit{majority margin} matrix, wich is computed using the frequency matrix. The generalized majority matrix, $M$, is computed using $N$ as follows:
\begin{equation}
M_{ij} = N_{ij} - N_{ji} = 2N_{ij} - \sum_{v \in \mathcal{V}} w_v,
\label{eq:mmatrix}
\end{equation}
Intuitively, $M$ counts the difference on the weighting votes that prefer $i$ to $j$ and those that prefer $j$ to $i$. Again, by setting $w_v=1$ for $v \in \mathcal{V}$ in Equation~\eqref{eq:nmatrix}, the original definition of majority margin is recovered. 

The most renowned voting rule in the C2 family is the Kemeny ranking, which aggregates the votes (rankings) in the sample to the permutation that maximizes the agreement, i.e., the permutation that minimizes the sum of the distances to the sample of rankings. The Kemeny ranking can be formulated using the majority margins matrix in Equation~\eqref{eq:mmatrix} as follows
\begin{equation}
\begin{split}
\sigma_K =\arg\min_{\sigma \in S_n}  \sum_{\substack{i,j \in [n] : \sigma(i) > \sigma(j)}} M_{ij}.
\end{split}
\label{eq:kemeny}
\end{equation}

The Kemeny ranking has been shown to maintain several interesting properties, such as being a median permutation of the sample of rankings, and the MLE of the modal ranking of the sample when the rankings are i.i.d. according to a Mallows model~\cite{gMallows}. Unfortunately, the problem of computing the Kemeny ranking given a sample of rankings $S$ has been shown to be NP-hard~\cite{Dwork:2001:RAM:371920.372165}.

Other interesting members of the family of the C2 family build upon the majority margin defined in Equation~\eqref{eq:mmatrix}. Worth highlighting are a pairwise query algorithm that returns the Kemeny ranking with high probability~\cite{Braverman2008}, the ranked pairs method~\cite{Tideman1987} and a $4/3$-approximation of the Kemeny ranking~\cite{Ailon2008}. 

The denotation of the Weighted voting rules comes from the fact that the majority margin matrix can be seen as the adjacency matrix of the a graph where the nodes are the items being ranked. The C1 family or Unweighted Voting Rules family uses the unweighted version of the graph defined by the majority margin matrix $M$. As a conclusion, we state that the unbalanced voting can be adapted to 2 complete families of voting rules.

\section{EXPERIMENTS}\label{sec:exper}
In this section, we provide empirical evidences of the strengths of uBorda to deal with the ranking aggregation problem in streaming scenarios. We illustrate that uBorda procedure can be used to find the most probable ranking of an evolving strong unimodal probability distribution over rankings. Moreover, we show how to select the value of $\rho$ when we expect uBorda to recover from the drift with high probability after a given number of samples.

\subsection{Rank aggregation for dynamic preferences} \label{sec:exp_uBorda}

This section analyses the performance of uBorda as a rank aggregation algorithm using streaming data of rankings. The experiments are evaluated with a synthetic dataset using an EMM. 

We show in the next lines how to generate the sample from an EMM of length $l= T*n*(n-1)/2$ given by $\pi_t$ for $t=l-1, ..., 0$ with drifts that occurs periodically every $T$ time stamps, i.e. $\pi_{l-T*i}\neq \pi_{l-T*i-1}$ for $i=1,...,n\cdot(n-1)/2-1$. The drifts are incremental satisfying that the Kendall's-$\tau$ distance between $\pi_{l-1}$ and $\pi_0$ is the maximum ($n\cdot(n-1)/2$) and for each drift the distance between $\pi_{t+1}$ and $\pi_t$ is one. This artificial scenario simulates abrupt, small drifts in terms of rankings that transforms a given starting ranking $\pi_l$ into its reverse $\pi_0$, e.g. for $n=5$ and $\pi_l=(12345)$ we have that $\pi_0=(54321)$. 

The concentration parameter $\theta_t$ is constant for ${t=l-1,...,0}$. It has been chosen so that the expected distance of the generated rankings to the consensus ranking $\pi_t$ is $1/3$ of the expected distance under the uniform distribution. This choice is not casual: for a very large choice of $\theta$, the samples are very close in Kendall's-$\tau$ distance to each other and to the modal ranking, making it an easy case both practically and theoretically. For $\theta=0$, the resulting sample is uniform. None of the cases satisfies a practical scenario. However, we believe that this setting is of interest in both theory and practice. 

The final sample consists of sampling an evolving MM whose sequence of consensus is $\Pi$. In other words, for each  $\pi_t$ we define the distribution $MM(\pi_t, \theta)$, and generate the sample $ \{\sigma^t_1, \ldots, \sigma^t_m\}$ by sampling that model ${\sigma^t_i \sim MM(\pi_t, \theta) }$ \cite{Irurozki2016b}. 

The evaluation process is done following the test-then-train strategy, a common approach in stream learning in which and instance $\sigma_t^i$ is generated, then used for evaluation and then feed to the training model. The error of uBorda is measured as Kendall's-$\tau$ distance $d(\bar\pi_t^i,\pi_t)$, where $\bar\pi_t^i$ is the uBorda ranking after sampling $\sigma_t^i$ and $\pi_t$ the current consensus ranking. Once the evaluation is done, the new instance $\sigma_t^i$ is appended to the sample, so it can be considered by uBorda in the next iteration of the process.

Note that we can handle a possibly infinite stream of rankings. Interestingly, the Borda counts of this infinite stream of rankings is stored with linear space complexity.

\begin{figure*}[t]
\centering
\includegraphics[width=.5\textwidth]{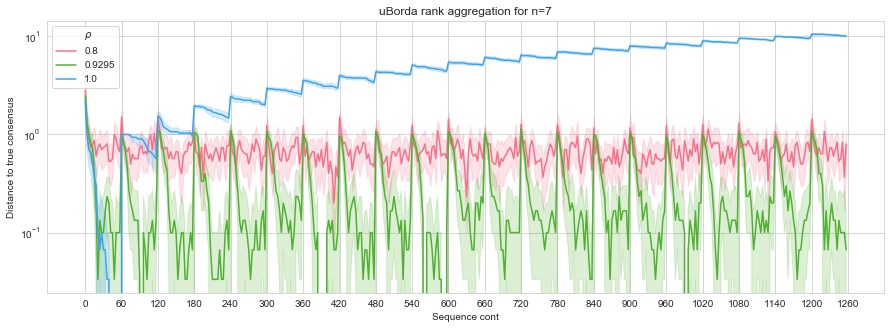}~
\includegraphics[width=.5\textwidth]{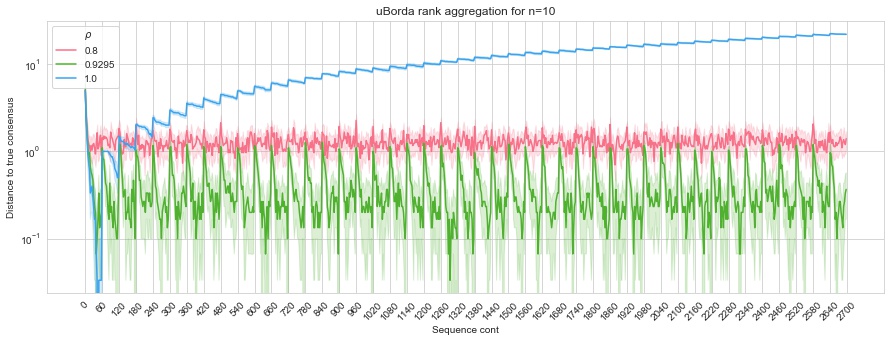}
\caption{Error in the estimated uBorda consensus after a new ranking in the sequence is received. The sequence is ordered in the x axis and there is a drift at each vertical line. The ranking size is $n=7$ (up) and $n=10$ (bottom).}
\label{fig:wbor}
\end{figure*}

The results of $n=7$ and $n=10$ are shown in Figure~\ref{fig:wbor}, where the X-axis orders the sequence of rankings chronologically and the Y-axis shows the error $d(\bar\pi_t^i, \pi_t)$.

Simulating the inclusion of expert knowledge in the experiment, we expect the rank aggregation to recover form the drifts after $m=20$ samples. Theorem~\ref{thm:rho_opt} shows that finding the value for $\rho$ that minimizes the probability of error for a given number o samples is done by solving a convex optimization problem. For our choice of $m=20$ the optimal $\rho$ is $0.9295$. For comparison, we use different fading factors $\rho \in  \{0.8,0.9295, 1\}$, each corresponding to a different line in the plot. 

For each parameter configuration the results are run 30 times and the average are shown with a stroke line and the .95 confidence interval ($\lambda=0.05$) as a shadow.

The first evaluations after a drift occurs (after each vertical grid-line), the error increases for every choice of $\rho$. As expected, as the number of rankings of the same distribution increases, the error tends to decrease. However,  differences is the value of  $\rho$ cause critical differences in the behaviour of uBorda.

When $\rho=0.9295$ uBorda has the most accurate results. As shown in Theorem~\ref{thm:rho_opt}, this value maximizes the probability of recovering from a drift in 20 samples among the parameter values considered. Moreover, after these 20 samples, it recovers the ground true ranking with error smaller than 0.1. 

Choosing $\rho=0.8$ makes uBorda forget quicker the previous permutations and this can lead to a situation in which too few of the last permutations are considered to estimate the consensus. The more chaotic behavior of the smallest value of $\rho=0.8$ (remind the logarithmic scale for the Y-axis) is related to this phenomenon in which few permutations are contributing in the estimation of the consensus, i.e., uBorda is aggregating a small number of rankings. 

Finally, for $\rho=1$ (when no fading factor is considered, which in turn is equivalent to using the standard Borda) has an increasing error. This is because it is assuming that the last and the first permutations seen are equally important, and the consensus of the population does not change in time. This is equivalent to a standard online Borda algorithm.

\section{CONCLUSIONS}\label{sec:conclusions}

In this paper, we have considered a novel scenario for rank elicitation which assumes online learning scenarios in which the distribution modeling the preferences changes as time goes by. Under this realistic prism, we have studied two well-known ranking problems: rank aggregation and label ranking.

Our main contribution is to generalize the Weighted Tournament solutions to the situation in which some voters are more reliable than others. We call this scenario \textit{unbalanced voting} and allow arbitrary \textit{reliability} values for the voters.

We argue that the unbalanced voting scenario is can be applied to the stream learning with concept drift. We denote as uBorda the version of Borda in which the voters have different weights and show that it can be computed efficiently, handling a possibly infinite sequence of rankings in linear space and quasi-linear time complexity. A thouh analysis leads to several contributions. First, we bound the number of samples required by uBorda to output the current ground truth modal ranking with high probability. Second, we show how to include expert knowledge to the uBorda. Finally, we have shown its efficiency in several empirical scenarios. 

In this paper, we raise the question of dynamic preferences, and this idea opens several interesting research lines. For example, we can analyze different voting rules in unbalanced voting contexts. Moreover, these voting rules can be used in different problems from stream learning since there many machine learning problems that handle situations in which some experts are more trustworthy than others. Moreover, we plan on considering the design of new aggregation algorithms for evolving Mallows model under different distances. Similar questions have already been considered in an offline setting for the Cayley \cite{Irurozki2018} and Hamming~\cite{Irurozki2014a} distances. Moreover, other ranking models such as Plackett-Luce~\cite{luce59} or Babington Smith~\cite{critchlow91} distributions are worth considered under this point of view. 

Finally, as a future work we plan to extend this analysis of unbalanced voting rules to different scenarios in which some rankings are more relevant than other. This scenario arises frequently in general machine learning problems when some agents are known to be more trustworthy than others. In combinatorial optimization problems, we consider the use of uBorda to speed up the process of finding an optimum by setting the weights of each solution proportionally to the fitness function of the given solution.


\bibliography{./mendeley}

\end{document}